\newtheorem{theorem}{Theorem}
\newtheorem{assumption}{Assumption}
\newtheorem{remark}{Remark}
\newtheorem{proposition}{Proposition}
\def\R{\mathbb R}
\def\E{\mathbb E}
\def\P{\mathbb P}
\def\Cov{\mathrm{Cov}}
\renewcommand\epsilon{\varepsilon}
\newcommand{\tr}{{\rm tr}}
\DeclareMathOperator*{\argmin}{arg\,min}
\def\pat{{\operatorname{pat}}}
\def\fast{{\operatorname{fast}}}
\def\ridge{{\operatorname{ridge}}}
\def\vv{{\mathbf{v}}}
\def\vw{{\mathbf{w}}}
\def\vx{{\mathbf{x}}}
\def\vy{{\mathbf{y}}}
\def\vz{{\mathbf{z}}}
\def \vmu{{\bm{\mu}}}
\def\mI{{\mathbf{I}}}
\def\mX{{\mathbf{X}}}
\def\mSigma {{\mathbf{\Sigma }}}
\def\R{\mathbb{R}}
\def\P{\mathbb{P}}
\def\E{\mathbb{E}}
\newcommand{\ie}{\emph{i.e.,}~}
\title{Concept Activation Vectors: a Unifying View and Adversarial Attacks}
\name{
  Ekkehard Schnoor $^{\dagger \ddagger}$, 
  Malik Tiomoko $^{\ast}$, 
  Jawher Said $^{\ddagger}$, 
  Alex Jung $^{\dagger}$, 
  Wojciech Samek $^{\ddagger\mathsection}$
  \thanks{This work was supported by the German Research Foundation (DFG) as research unit DeSBi [KI-FOR 5363] (459422098), by the Research Council of Finland (Decision \#363624) as \textit{A Mathematical Theory of Trustworthy Federated Learning (MATHFUL)}, 
  by the Jane and Aatos Erkko Foundation (Decision \#A835) as
  \textit{A Mathematical Theory of Federated Learning (TRUST-FELT)},
  and by Business Finland as
  \textit{Forward-Looking AI Governance in Banking \& Insurance (FLAIG)}.
  }
}
\address{
  $^{\dagger}$ Department of Computer Science, Aalto University, Espoo, Finland \\
  $^{\ddagger}$ Department of Artificial Intelligence, Fraunhofer Heinrich Hertz Institute, Berlin, Germany \\
  $^{\ast}$ Huawei Noah’s Ark Lab, Huawei Technologies, Paris, France\\
  $^{\mathsection}$ Department of Electrical Engineering and Computer Science, Technical University Berlin, Germany\\
}
\begin{document}

\maketitle


\begin{abstract}

Concept Activation Vectors (CAVs) are a tool from explainable AI, offering a promising approach for understanding how human-understandable concepts are encoded in a model's latent spaces. They are computed from hidden-layer activations of inputs belonging either to a concept class or to non-concept examples. Adopting a probabilistic perspective, the distribution of the (non-)concept inputs induces a distribution over the CAV, making it a random vector in the latent space. 
This enables us to derive mean and covariance for different types of CAVs, leading to a unified theoretical view.
This probabilistic perspective also reveals a potential vulnerability: CAVs can strongly depend on the rather arbitrary non-concept distribution, 
a factor largely overlooked in prior work. We illustrate this with a simple yet effective adversarial attack, underscoring the need for a more systematic study.

\end{abstract}
\begin{keywords}
Concept Activation Vectors, Explainable AI (XAI), Deep Learning, 
Statistical Learning Theory
\end{keywords}

\vspace{-0.2cm}
\section{Introduction}\label{sec:intro}
\vspace{-0.2cm}

Many XAI techniques generate attribution maps, highlighting important input regions for individual predictions. 
However, alone they provide limited insight, as often they do not specify what exactly the model has identified in those regions.
Beyond mere feature-based explanations, \textit{concept-based} explanations aim for explanations in terms of human-understandable concepts. 
Among them are CAVs \cite{kim2018interpretability}, that perform linear probings to detect if human-understandable concepts  are encoded in latent layers, and the related \textit{Testing with Concept Activation Vectors (TCAV)}, that aims to quantify the contribution of a certain concept (e.g., \textit{stripes}) to the prediction of a class (e.g. \textit{zebras}). 
The original approach of \cite{kim2018interpretability} proposes classical linear classifiers like Support Vector Machines (SVMs), Ridge Regression or the LASSO  \cite{tibshirani1996regression} 
to derive a CAV, which have been compared and applied for unlearning biases in \cite{dreyer2024hope}. 
Other methods for computing CAVs include \textit{PatternCAV} \cite{pahde2022navigating}, and the recently introduced \textit{FastCAV} \cite{schmalwasser2025fastcav}, both of which we will analyse more closely in this paper.
While the aforementioned works consider applications in computer vision, CAVs were also applied successfully
e.g. to analyse the acquisition of chess concepts during training by self-play in \textit{AlphaZero} \cite{mcgrath2022acquisition, schut2023bridging}.
%
%
 %
Despite the successful applications of CAVs, a systematic understanding of their performance and robustness is still lacking. 
By construction, a CAV  corresponds to the normal vector of a linear separator that distinguishes between (non-)concept activations.\footnote{Note that e.g. also for \textit{PatternCAV} and \textit{FastCAV}, an (optimal) separating hyperplane can be determined; therefore, we can interpret them as classifiers, even though they are not based on a classical linear classifier.} 
This linear nature allows us to draw on the extensive body of work analyzing high-dimensional linear models with asymptotically sharp performance characterizations.
We focus on the ridge regression \cite[Chapter 2.3]{tiomoko2021advanced}, and refer to similar works on SVMs \cite{liao2019large}, logistic regression \cite{mai2019large}, LASSO \cite{tiomoko2022deciphering}.
The classification accuracy of a CAV-based classifier can be interpreted as a measure for the degree to which a neural network has encoded a concept. It is of interest also in the context of Concept Bottleneck Models \cite{pmlr-v119-koh20a}, which aim to increase the interpretability, typically at the price of a lower accuracy.
In this paper, we propose a unifying probabilistic framework to investigate CAVs, enabling a more rigorous comparison 
of different CAV types by their distribution and classification performance. Our main contributions are as follows. 

\vspace{-0.3cm}
\begin{itemize}  \setlength\itemsep{-0.2em}
    \item We derive mean and covariance of \textit{PatternCAV} \cite{pahde2022navigating} and \textit{FastCAV} \cite{schmalwasser2025fastcav} in terms of the hidden-layer statistics in Proposition \ref{prop:main_th_result}, 
    allowing to predict their accuracies.
    \item We reveal the equivalence of seemingly different CAV methods the case of balanced classes\footnote{This is not a strong restriction and typically the case in practice, as usually non-concept examples are cheap to obtain, making it easy to match the number of previously generated concept examples.} and large regularization parameter in the case of the ridge regression.
    \item We present an adversarial attack on the \textit{TCAV} method.
\end{itemize}

\vspace{-0.2cm}
\section{ASSUMPTIONS AND SETUP}\label{sec:setup_and_assumptions}
\vspace{-0.2cm}

Consider an $L$-layer neural network for classification, written as the concatenation $f_l \circ h_{l}$, where $f_l : \R^{d_0} \to \R^{d_l}$ is the mapping from the input (of size $d_0$) up to layer $l$ (of size $d_l$); similarly, $h_l : \R^{d_l} \to \R^{d_L}$ is the mapping from the $l$th to the final $L$th layer. It is convenient to restrict ourselves to a specific class prediction, \ie $h_{l,k}:  \R^{d_l} \to \R$ is  similar to $h_{l}$, but restricted to class $k$ at the output. 
Assume we are given a set of examples collected by a user to illustrate some concept $C$ of interest, as well as a set of
\textit{non-concept} (or ``random'') examples (arbitrary input examples not containing the \textit{concept} $C$, or noise).
We may then train a linear classifier to separate their $l$th layer activations, and in this way obtain a CAV $\vv_C^l \in \R^{d_l}$ that is orthogonal to the separating hyperplane (showing towards the concept region by convention). 
A high accuracy (of the CAV classifier) may be interpreted in the sense that the neural network has encoded the concept.
While motivated by CAVs, we may often use a convenient compact notation considering a generic binary classification 
problem with a random training data matrix $\mX = [\vx_1,\ldots,\vx_n] \in \R^{d\times n}$ of $n$ i.i.d. data points of feature size $d$ and its associated label vector $\vy = [y_1,\ldots,y_n] \in \{-1,1\}^d$ drawn from a mixture distribution of two classes with class-specific (conditional) mean $\vmu_\ell \in \R^d$ and covariance $\mSigma_\ell \in \R^{d \times d}$ for $\ell = 1,2$, and satisfying Assumption \ref{assum:concentration}, for the $n_{\ell}$ vectors of class $\mathcal C_\ell$, $\ell \in \{1,2\}$; in particular, $n = n_1+n_2$. Our goal is to predict the label $y$ for a new test datum $\vx \in \R^d$. 

\begin{assumption}[Data Concentration]
\label{assum:concentration}
The random vector $\vx \in \R^d$ is \emph{$q$-exponentially concentrated}, i.e., for any $1$-Lipschitz continuous (w.r.t. the $\ell_2$-norm) function 
$\varphi : \R^d \to \R$ it holds
\[
\P\left(|\varphi(\vx_i) - \E[\varphi(\vx_i)]| \ge t\right)
\le C e^{-(t/\sigma)^q} \qquad \forall t>0,
\]
for some constants $q>0$, $C>0$, $\sigma>0$ independent of $d$. 
\end{assumption}

Random vectors satisfying Assumption \ref{assum:concentration} include isotropic Gaussian random vectors, the uniform distribution on the sphere, and notably any Lipschitz-continuous transformations thereof (e.g., features from GANs~\cite{seddik2020random}). As (layers of) neural networks constitute Lipschitz-continuous functions, this framework is 
highly suitable in the context of CAVs.

We will use a result \cite[Chapter 2.3]{tiomoko2021advanced} for the ridge regression which makes use of an ``large $n$, large $d$'' assumption for its theoretical derivations, \ie formally assuming that $n > d$ and, as $n_\ell, n, d \to \infty$, asymptotically $d/n \to c_0 \in (0,1)$ and  ${n_{\ell}}/n\to c_{\ell} > 0$ for $\ell = 1, 2$, with class probabilities $c_1, c_2$; often we will just consider the balanced case of $n_1 = n_2$ and  therefore $c_1 = c_2 = 1/2$ with $c_1, c_2 \in (0,1)$, $c_1 + c_2 = 1$. 
Still, the asymptotic prediction is often accurate in a finite but large-dimensional setting.
Other results, notably our main result Proposition \ref{prop:main_th_result}, hold in a non-asymptotic setting.

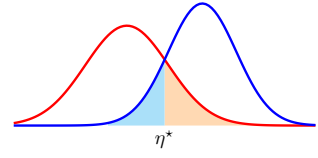
\begin{figure}[h]
\centering
\begin{tikzpicture}[scale=0.75]
    \begin{axis}[no markers, domain=-4:4, samples=100, axis lines*=left, height=5cm, width=8cm,
                 xtick=\empty, ytick=\empty, axis line style={draw=none}, ymin=-0.2, ymax=0.5] 
                     \addplot [fill=orange!30, draw=none, domain=0:+2.5] {gauss(-1,1.1)} \closedcycle ;
                     \addplot [fill=cyan!30, draw=none, domain=-1.5:0] {gauss(1,0.9)} \closedcycle ;
                     \addplot [very thick,red!100!black] {gauss(-1,1.1)};
                     \addplot [very thick,blue!100!black] {gauss(+1,0.9)};
                     \node at (axis cs: 0, -0.05) {$\eta^\star$};  
    \end{axis}
\end{tikzpicture}
\vspace{-0.2cm}
\caption[Gaussian classification scores (illustration)]
{Illust. of normal distributions of
$g(\vx) = \vx^\top \vw$ for \textcolor{red}{$\vx \in \mathcal{C}_1$ (red)} and \textcolor{blue}{$\vx \in \mathcal{C}_2$ (blue)}
with optimal decision threshold $\eta^\star$ at the intersection of the density function between the means.
}
\label{fig:two_gaussians}
\end{figure}

As CAVs perform a linear probing, we consider a linear classifier with weight vector $\vw \in \R^d$ and bias $\eta \in \R$,
\begin{equation}
    g(\vx) = \frac{1}{\sqrt{n}} {\vw}^\top\vx  \underset{\mathcal{C}_1}{\overset{\mathcal{C}_2}{\gtrless}} \eta.
    \label{eq:general_linear_classifier_g}
\end{equation}
\noindent Let us recall the following result, which enables us to predict in advance classification accuracies of linear classifiers, assuming class-specific normal distributions 
of the classification scores $g(\vx)$, which can be decomposed in terms of the data first and second-order statistics. We provide a generic statement;
in the CAV context, the data corresponds to the hidden-layer activations of the \textit{non-concept} (class $\mathcal{C}_1$, label $-1$) \textit{vs.} the \textit{concept} 
(class $\mathcal{C}_2$, label $1$) examples. Note that this result has been used before, e.g. in \cite{tiomoko2022deciphering}; still, to our best knowledge it has never been explored in the context of CAVs.

\begin{theorem}\label{eq:thm_classification_accuracy}
Let $\vw \in \R^d$ be the (random) weight vector of the linear model $g$ in \eqref{eq:general_linear_classifier_g},
with mean $\bar{\vw} = \E[\vw]$ and covariance $\mSigma_\vw = \Cov(\vw)$. 
Assume $g(\vx)$ to
have class-specific normal distributions, \ie $g(\vx) \sim \mathcal{N}\left(m_\ell, \sigma_\ell^2\right)$ for (a test data point) of either class, \ie
$ \vx \in \mathcal{C}_\ell$ independent of $\vw$, for $\ell=1,2$.
Then, for $ \vx \in \mathcal{C}_\ell$ with mean $\vmu_\ell$  and covariance $\mSigma_\ell$ it holds  
\begin{align*}
    m_\ell
= & \frac{1}{\sqrt{n}} \E_{\vw,\vx}\left[{\vw}^\top \vx\right] 
=   \frac{1}{\sqrt{n}} \bar{\vw}^\top \bar{\vx}, \\
    \sigma_\ell^2 
= & \frac{1}{n}\tr\left(\mSigma_\vw\mSigma_\vx\right) 
    +   \frac{1}{n}\tr\left(\mSigma_\vw \bar{\vx}\bar{\vx}^\top\right)  +   \frac{1}{n}\tr\left(\mSigma_\vx \bar{\vw}\bar{\vw}^\top\right). 
\end{align*}
\noindent
Consequently, the (optimal) classification accuracy of \eqref{eq:general_linear_classifier_g} is given by 
$1-\varepsilon$, where the probability of misclassification $  \epsilon $ is  
\begin{align}
\label{eq:error_of_misclassification}
    \epsilon 
=   &   c_1 \cdot \textcolor{orange}{\P \left( \vx \rightarrow \mathcal{C}_2 \, | \, \vx \in \mathcal{C}_1 \right)} + 
        c_2 \cdot \textcolor{cyan}{\P \left( \vx \rightarrow \mathcal{C}_1 \, | \, \vx \in \mathcal{C}_2 \right) } \nonumber \\
=   &   c_1 \cdot \textcolor{orange}{\P \left(X > \eta \, | \, X \sim \mathcal{N}\left(m_1, \sigma_1^2\right) \right)} \; + \nonumber \\
    &   c_2 \cdot \textcolor{cyan}{\P \left( Y < \eta  \, | \, Y \sim \mathcal{N}\left(m_2, \sigma_2^2\right) \right)} \; ,
\end{align}
where $\eta\in \R$ denotes the (optimal) decision threshold in \eqref{eq:general_linear_classifier_g}.
\end{theorem}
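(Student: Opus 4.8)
The plan is to dispatch the three assertions in order: the first two are direct first- and second-moment computations that only use the independence of $\vw$ and the test point $\vx$, while the last is an immediate consequence of the assumed Gaussianity of $g(\vx)$ together with the law of total probability.

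\emph{Mean.} Fix $\vx \in \mathcal{C}_\ell$, so $\bar\vx = \vmu_\ell$ and $\mSigma_\vx = \mSigma_\ell$. Since $\vx$ is independent of $\vw$, linearity of expectation gives $m_\ell = \frac{1}{\sqrt n}\,\E_{\vw,\vx}[\vw^\top\vx] = \frac{1}{\sqrt n}\,\E[\vw]^\top\E[\vx] = \frac{1}{\sqrt n}\,\bar\vw^\top\bar\vx$, which is the claimed identity.

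\emph{Variance.} Write $\sigma_\ell^2 = \frac1n \Var(\vw^\top\vx) = \frac1n\big(\E[(\vw^\top\vx)^2] - (\E[\vw^\top\vx])^2\big)$. For the second moment I would use the trace trick $(\vw^\top\vx)^2 = \tr(\vx\vx^\top\vw\vw^\top)$ and then the independence of the matrix-valued quantities $\vx\vx^\top$ and $\vw\vw^\top$ to pull the expectation inside and factor, $\E[(\vw^\top\vx)^2] = \tr\!\big(\E[\vx\vx^\top]\,\E[\vw\vw^\top]\big)$. Substituting $\E[\vx\vx^\top] = \mSigma_\vx + \bar\vx\bar\vx^\top$ and $\E[\vw\vw^\top] = \mSigma_\vw + \bar\vw\bar\vw^\top$, expanding into the four trace terms $\tr(\mSigma_\vx\mSigma_\vw) + \tr(\mSigma_\vx\bar\vw\bar\vw^\top) + \tr(\mSigma_\vw\bar\vx\bar\vx^\top) + \tr(\bar\vx\bar\vx^\top\bar\vw\bar\vw^\top)$, and cancelling the last one, which equals $(\bar\vw^\top\bar\vx)^2 = (\E[\vw^\top\vx])^2$, against the subtracted square, leaves precisely $\tr(\mSigma_\vw\mSigma_\vx) + \tr(\mSigma_\vw\bar\vx\bar\vx^\top) + \tr(\mSigma_\vx\bar\vw\bar\vw^\top)$; dividing by $n$ yields the stated formula.

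\emph{Misclassification probability.} Under the decision rule in \eqref{eq:general_linear_classifier_g}, a point of $\mathcal{C}_1$ is misclassified exactly when $g(\vx) > \eta$ and a point of $\mathcal{C}_2$ exactly when $g(\vx) < \eta$. Conditioning on the true class with priors $c_1, c_2$ and inserting $g(\vx)\sim\mathcal N(m_\ell,\sigma_\ell^2)$ for $\vx\in\mathcal C_\ell$ gives \eqref{eq:error_of_misclassification} for \emph{any} fixed $\eta$. The word ``optimal'' then refers to minimising this expression over $\eta$: differentiating in $\eta$ shows a minimiser satisfies $c_1\,\mathcal N(\eta;m_1,\sigma_1^2) = c_2\,\mathcal N(\eta;m_2,\sigma_2^2)$, i.e. the threshold sits at the (between-means) intersection of the weighted densities, recovering the Bayes-optimal rule illustrated in Figure \ref{fig:two_gaussians}, and $1-\epsilon$ is the corresponding accuracy.

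The computations are routine; the only step needing a moment of care is the factorisation $\E[\vx\vx^\top\vw\vw^\top] = \E[\vx\vx^\top]\,\E[\vw\vw^\top]$, which is legitimate because $\vx\vx^\top$ and $\vw\vw^\top$ are functions of the independent variables $\vx$ and $\vw$ respectively. No argument is needed for the Gaussianity of $g(\vx)$, as this is a hypothesis of the theorem (it is the standard high-dimensional regime where $\vw^\top\vx$ concentrates to a normal law for light-tailed, weakly-dependent coordinates), so the result is conditional on that modelling assumption.
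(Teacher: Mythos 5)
Your proposal is correct and follows exactly the route the paper takes: the paper's own proof simply asserts that the moment formulas follow from a ``straightforward computation'' using the independence of $\vw$ and $\vx$, and that the error formula follows from the decision rule and the Gaussianity assumption; you have merely written out those routine computations (trace trick, cancellation of the $(\bar\vw^\top\bar\vx)^2$ term, law of total probability) in full. No gaps; your added remark on the optimal threshold is a harmless bonus consistent with Figure~\ref{fig:two_gaussians}.
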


\begin{proof}
The formulas for $ m_\ell$ and $\sigma_\ell^2$ follow from a straightforward computation of the mean and variance, 
by the independence of $\vw$ and $\vx$ (and even hold without the assumption of Gaussianity).
The formula for $\varepsilon$ is due to the classification rule \eqref{eq:general_linear_classifier_g} and the assumption of $g(\vx)$ to be Gaussian.
\end{proof}

The two summands adding up to $\varepsilon$ in \eqref{eq:error_of_misclassification} correspond to the areas of respective color in Fig. \ref{fig:two_gaussians}.
Notably, the classification accuracy only depends on the few scalars $m_1, m_2, \sigma_1^2, \sigma_2^2$, from which it can be computed by numerical integration of the normal distribution's CDF.
To obtain those scalars, we need to estimate both the data as well as the model's means and covariances.
While in practice both are simply estimated empirically from the training data, 
deriving the induced distribution over $\vw$ is challenging (see Sec. \ref{sec:main_results}).
Finally, we remark that the assumption of normal distributions of $g(\vx)$ in \eqref{eq:general_linear_classifier_g} is highly realistic under Assumption \ref{assum:concentration} and has been often observed in practice; see \cite[Theorem 3.2]{seddik2021unexpected} and \cite{klartag2007central, fleury2007stability} for a version of the central limit theorem for concentrated random vectors.
\noindent
Next, let us recall a few different methods to compute CAVs, preparing for our main results in the next section,
which includes an application of Theorem \ref{eq:thm_classification_accuracy} for the different ways to compute the CAV, corresponding to the weight vector $\vw$.
One way to obtain the CAV $\vw$ is by the \textit{ridge regression}
\begin{equation}
    \vw_\ridge = \argmin_{\vw \in \R^d} \left\| \vy - \frac{\,\mX^\top}{\sqrt{n}} \vw \right\|_2^2 + \lambda \| \vw \|_2^2.
  \label{eq:ridge_regression_problem}    
\end{equation}
We solve \eqref{eq:ridge_regression_problem} by \cite[Chapter 2.3]{tiomoko2021advanced} - see also \cite{cherkaoui2025high} - , iteratively computing $m_\ell, \sigma_\ell^2$ with $\vw = \vw_\ridge$, for $\ell=1,2$.  
As an alternative method to compute CAVs, the \textit{PatternCAV} \cite{pahde2022navigating} is
\begin{align}
    \vw_{\pat}
=  & \argmin_{\vw \in \R^{d}} \left\| \mX^\top - \vy \vw^\top \right\|_2^2   \\
=  & \frac{1}{n_2} \sum_{i=1}^{n_2} \vx_i^{(2)} - \frac{1}{n_1} \sum_{i=1}^{n_1} \vx_i^{(1)},
   \label{eq:pattern_CAV_loss_function}
\end{align}
showing in the direction of $\vmu_2$, \textit{towards the concept region}. 
For the recently proposed \textit{FastCAV} \cite{schmalwasser2025fastcav} we begin by computing
the joint empirical mean of both classes, denoted by $\hat{\vmu}_{1,2}$,

\begin{equation}
    \hat{\vmu}_{1,2} = \frac{1}{n_1 + n_2} \sum_{\ell=1}^{2} \sum_{i=1}^{n_i} \vx_i^{(\ell)}   \in \R^d.
    \label{eq:fastCAV_joint_empirical_mean_of_classes}
\end{equation} 
\noindent
The \textit{FastCAV} is the vector pointing from the global mean $\hat{\vmu}_{1,2}$ towards the mean of the concept class, \ie $\mathcal{C}_2$. 
Formally, it is

\begin{equation}
    \vw_{\fast} 
=   \frac{1}{n_2} \sum_{i=1}^{n_2} \left( \vx_i^{(2)} - \hat{\vmu}_{1,2} \right) 
=   \frac{1}{n_2} \sum_{i=1}^{n_2}  \vx_i^{(2)} - \hat{\vmu}_{1,2}. \label{eq:fastCAV_definition}    
\end{equation}

\vspace{-0.2cm}
\section{MAIN RESULTS AND EXPERIMENTS}\label{sec:main_results}
\vspace{-0.2cm}

We present our theoretical main results, together with experiments\footnote{We provide code to reproduce our experiments in this Github repository:\\ \url{github.com/jawhar00/cav-unifying-view-adversarial}} on simple synthetic data like Gaussian Mixture Models (GMMs) for binary classification, as well as the \textit{ResNet-18} model applied to the \textit{CIFAR-10} dataset, and a simple neural network with three hidden layers for times series classification trained by ourselves, with the time series input given by
\begin{equation}
y(t) = A \cdot \sin(2\pi \cdot f \cdot t) + T \cdot t + \epsilon, \quad t = 0, \dots, T,
\label{eq:time_series_model}
\end{equation}
with amplitude $A$, frequency $f$, trend $T$ and noise $\epsilon$. The classes are characterized by (a combination of) high or low values for $A$, $f$ and $T$ (presence or absence of the concepts).

\vspace{-0.1cm}
\subsection{Classification Accuracies of CAVs}
\vspace{-0.2cm}


We adopt a probabilistic standpoint, interpreting CAVs as random vectors in latent space whose distribution is induced by the non-linear transformations of the (non-)concept input distributions. This provides new insights into recently suggested techniques for computing CAVs. While relying on \cite[Chapter 2.3]{tiomoko2021advanced} for ridge regression \eqref{eq:ridge_regression_problem}, our main theoretical result derives the distributions of \textit{PatternCAV} \eqref{eq:pattern_CAV_loss_function} and \textit{FastCAV} \eqref{eq:fastCAV_definition}, revealing their performances and surprisingly close relation.

\begin{proposition}\label{prop:main_th_result}
[Distribution of ${\vw}_{\fast}$ and ${\vw}_{\pat}$]
\label{prop:hyperplane}
Mean and covariance of ${\vw}_{\fast}$ and ${\vw}_{\pat}$ can be compactly expressed in terms of the data distribution as follows.
Assuming class-specific means $\vmu_\ell$ and covariances $\mSigma_\ell$ for $\ell=1,2$, it holds
\begin{align}
 \E[  \vw_{\pat} ] & = \bar{\vw}_{\pat} = \vmu_2 - \vmu_1 \in \R^d, \label{eq:w_pat_mean} \\
 \Cov(\vw_{\pat})  & = \mSigma_{\vw_{\pat}} 
=   \frac{1}{n_1} \mSigma_1 + \frac{1}{n_2} \mSigma_2 \in \R^{d \times d}, \label{eq:w_pat_covariance}
\end{align}
For $n_1 = n_2$, $\vw_{\fast}$ is a scaled version\footnote{More generally, in the unbalanced case, $\vw_{\fast} = n_1/(n_1+n_2) \vw_{\pat}$.}
of $ \vw_{\pat}$ as $ \vw_{\fast} = \frac{1}{2}  \vw_{\pat}$.
Consequently, the mean and covariance of $ \bar{\vw}_{\fast} $ are 
\begin{align}
   \E [ \bar{\vw}_{\fast} ]
= &  \bar{\vw}_{\fast}  = \frac{\vmu_2 - \vmu_1}{2} \in \R^d, \label{eq:fastCAV_mean}    \\ 
     \Cov( \bar{\vw}_{\fast} )
=  & \mSigma_{\vw_{\fast}}  
=   \frac{1}{4 n_1}\mSigma_1 + \frac{1}{4 n_2}\mSigma_2 \in \R^{d \times d}. \label{eq:fastCAV_cov_definition}
 \end{align}
\end{proposition}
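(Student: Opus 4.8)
The plan is to treat both CAVs as empirical averages of the (non-)concept activations and to compute their first two moments directly, exploiting the i.i.d.\ structure within each class. First I would handle \textit{PatternCAV}. For the mean, I would take expectations in \eqref{eq:pattern_CAV_loss_function}: by linearity, $\E[\vw_{\pat}] = \frac{1}{n_2}\sum_{i=1}^{n_2}\E[\vx_i^{(2)}] - \frac{1}{n_1}\sum_{i=1}^{n_1}\E[\vx_i^{(1)}] = \vmu_2 - \vmu_1$, since each $\vx_i^{(\ell)}$ has mean $\vmu_\ell$. This gives \eqref{eq:w_pat_mean} and also verifies the claimed closed form of the argmin by a standard least-squares computation (differentiating $\|\mX^\top - \vy\vw^\top\|_2^2$ in $\vw$ and using $y_i\in\{-1,1\}$). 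For the covariance, I would write $\vw_{\pat} = \frac{1}{n_2}\sum_i \vx_i^{(2)} - \frac{1}{n_1}\sum_i \vx_i^{(1)}$ as a sum of two independent terms (the two class samples are independent), so $\Cov(\vw_{\pat})$ is the sum of the covariances of each term. Within a class the $n_\ell$ vectors are i.i.d.\ with covariance $\mSigma_\ell$, so $\Cov\!\big(\frac{1}{n_\ell}\sum_i \vx_i^{(\ell)}\big) = \frac{1}{n_\ell}\mSigma_\ell$, yielding \eqref{eq:w_pat_covariance}.

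Next I would turn to \textit{FastCAV}. The key algebraic observation is that, using \eqref{eq:fastCAV_joint_empirical_mean_of_classes} with $n_1 = n_2 =: m$, the global empirical mean is $\hat{\vmu}_{1,2} = \tfrac12\big(\frac{1}{m}\sum_i \vx_i^{(1)} + \frac{1}{m}\sum_i \vx_i^{(2)}\big)$, i.e.\ the average of the two class empirical means. Substituting into \eqref{eq:fastCAV_definition} gives $\vw_{\fast} = \frac{1}{m}\sum_i \vx_i^{(2)} - \tfrac12\big(\frac{1}{m}\sum_i \vx_i^{(1)} + \frac{1}{m}\sum_i \vx_i^{(2)}\big) = \tfrac12\big(\frac{1}{m}\sum_i \vx_i^{(2)} - \frac{1}{m}\sum_i \vx_i^{(1)}\big) = \tfrac12 \vw_{\pat}$, which is the claimed identity. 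From $\vw_{\fast} = \tfrac12\vw_{\pat}$ the remaining formulas \eqref{eq:fastCAV_mean} and \eqref{eq:fastCAV_cov_definition} follow immediately by linearity of expectation and the scaling rule $\Cov(a\vw) = a^2\Cov(\vw)$, so $\E[\vw_{\fast}] = \tfrac12(\vmu_2-\vmu_1)$ and $\Cov(\vw_{\fast}) = \tfrac14\Cov(\vw_{\pat}) = \frac{1}{4m}\mSigma_1 + \frac{1}{4m}\mSigma_2$, which matches \eqref{eq:fastCAV_cov_definition} when $n_1 = n_2 = m$.

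The calculation is essentially routine, so there is no serious obstacle; the only points requiring mild care are (i) confirming that the closed-form expression for $\vw_{\pat}$ in \eqref{eq:pattern_CAV_loss_function} is indeed the minimizer of the stated Frobenius-norm objective (a one-line gradient computation, using that $\vy^\top\vy = n$), and (ii) being explicit that the two class sub-samples are mutually independent so that cross-covariance terms vanish — this is exactly the setup described before Assumption \ref{assum:concentration}. I would also note that these identities are \emph{non-asymptotic} and require only the existence of the first two moments $\vmu_\ell, \mSigma_\ell$, not the concentration Assumption \ref{assum:concentration} (which is needed only later, when feeding these moments into Theorem \ref{eq:thm_classification_accuracy}).
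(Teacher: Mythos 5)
Your proposal is correct and follows essentially the same route as the paper's own proof: direct moment computations for $\vw_{\pat}$ using linearity and independence of the two class samples, followed by the algebraic identity $\vw_{\fast} = \tfrac{1}{2}\vw_{\pat}$ for $n_1 = n_2$ and the scaling rules for mean and covariance. Your write-up merely spells out details the paper leaves implicit (the least-squares verification of the closed form and the vanishing of cross-covariances), so there is nothing substantive to add.
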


\begin{proof}
The case of the mean \eqref{eq:w_pat_mean} is obvious, and \eqref{eq:w_pat_covariance} concerning the covariance follows from a straightforward computation starting from \eqref{eq:pattern_CAV_loss_function}. Next, we then obtain the scaling,
\begin{equation*}
          \vw_{\fast} 
=   \frac{1}{n_2} \sum_{i=1}^{n_2}  \vx_i^{(2)} - 
    \frac{1}{2 n_1} \sum_{i=1}^{n_1} \vx_i^{(1)}   -
    \frac{1}{2 n_2} \sum_{i=1}^{n_2} \vx_i^{(2)}  
=   \frac{1}{2}  \vw_{\pat}, 
\end{equation*}
only by \eqref{eq:pattern_CAV_loss_function} and \eqref{eq:fastCAV_definition},
immediately yielding \eqref{eq:fastCAV_mean} and \eqref{eq:fastCAV_cov_definition}.
\end{proof}

\begin{remark}[Relationship between ${\vw}_{\fast}$, ${\vw}_{\pat}$ and ${\vw}_{\ridge}$]
It turns out that $ \vw_{\pat}$, and by $ \vw_{\fast} = \tfrac{1}{2}  \vw_{\pat}$, also $ \vw_{\fast} $, are closely related to the case of ridge regression, since for sufficiently large $\lambda = O(\sqrt{n})$, by simply inserting $\lambda = \sqrt{n}$,
\begin{align}
        \vw_{\ridge}
& =       \left(\frac{1}{n} \mX \mX^\top + \lambda \mI_p \right)^{-1} \frac{\mX}{\sqrt{n}} \vy  
\approx \frac{1}{\lambda}  \mI_p  \frac{\mX}{\sqrt{n}} \vy \label{eq:ridge_cav_pattern_cav_scaling} \\
& =   \frac{1}{n} \sum_{i=1}^n y_i \vx_i
=   \frac{1}{n} \left(\sum_{i=1}^{n_2} \vx_i^{(2)} - \sum_{i=1}^{n_1} \vx_i^{(1)} \right)
= \frac{1}{2} \vw_\pat . \nonumber
\end{align} 
\end{remark}
\vspace{-0.1cm}

Even though the hyperplane is the primary object of interest in \textit{TCAV}, Proposition \ref{prop:hyperplane} further allows us to derive the associated classification accuracy, which, while not essential for the computation of \textit{TCAV} itself, provides valuable theoretical insight into the reliability of concept encoding.
As the error $\epsilon$ from \eqref{eq:error_of_misclassification} remains unchanged when scaling $\vw$ (assuming $g(\vx)$ is normally distributed), $\vw_\pat$ and
$\vw_\fast$ turn out to have exactly the same accuracies as classifiers, and a performance similar to ridge regression for sufficiently large $\lambda$. This is confirmed by the experiments on synthetic data in Fig. \ref{fig:rr_pattern_fast_theory_emp_synthetic}.
Fig. \ref{fig:time_series_ridge_regression_lambda} is similar to Fig. \ref{fig:rr_pattern_fast_theory_emp_synthetic}, 
with ridge regression applied to the activations of the (non-)concept examples in the time series model, comparing errors for different layers.
Fig. \ref{fig:gaussian_distributions_ResNet18} is similar to the illustration in Fig. \ref{fig:two_gaussians}, for different
layers of the \textit{ResNet-18} model \cite{he2016deep} and \textit{CIFAR-10} dataset \cite{krizhevsky2009learning}, comparing theoretical prediction and empirical findings on the test set.

 \vspace{-0.3cm}

\begin{figure}[h]
    \centering
    \begin{subfigure}[b]{0.49\linewidth}
        \centering
        \includegraphics[width=\linewidth]{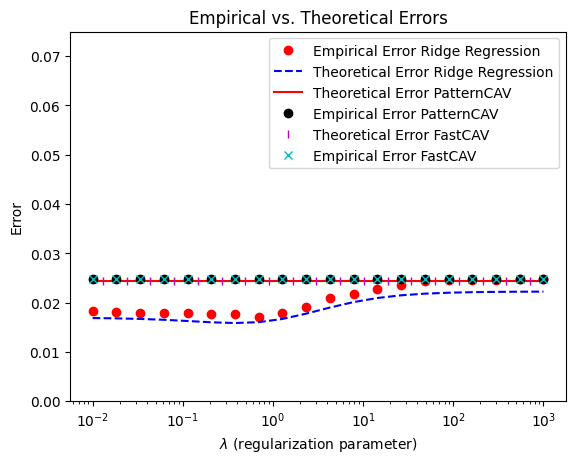}
        \caption{Errors of \textit{ridge regression} (plotted over $\lambda$), \textit{FastCAV} and \textit{PatternCAV} (constant, repeated for $\lambda$) for synthetic data (GMM).}
        \label{fig:rr_pattern_fast_theory_emp_synthetic}
    \end{subfigure}
    \hfill
    \begin{subfigure}[b]{0.49\linewidth}
        \centering
        \includegraphics[width=\linewidth]{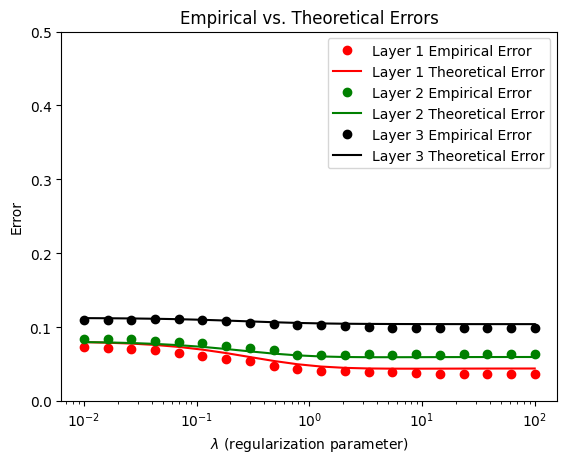}
        \caption{Time series \eqref{eq:time_series_model}: concept \textit{high freq.} vs. non-concept \textit{noise} $\vx \sim \mathcal{N}(\bm{0}, \mI)$ (at input); $\varepsilon$ at their latent layers activations (colors).}
        \label{fig:time_series_ridge_regression_lambda}
    \end{subfigure}
   \caption{Accurate theoretical prediction of the classification error $\varepsilon$ from \eqref{eq:error_of_misclassification} as a function of
            $\lambda$ (\textit{ridge regression}); different colors refer to different methods \ref{fig:rr_pattern_fast_theory_emp_synthetic} or different layers \ref{fig:time_series_ridge_regression_lambda}.}
    \label{fig:combined_figures}
\end{figure}



\begin{figure}[h]
    \centering
    \includegraphics[width=0.8\linewidth]{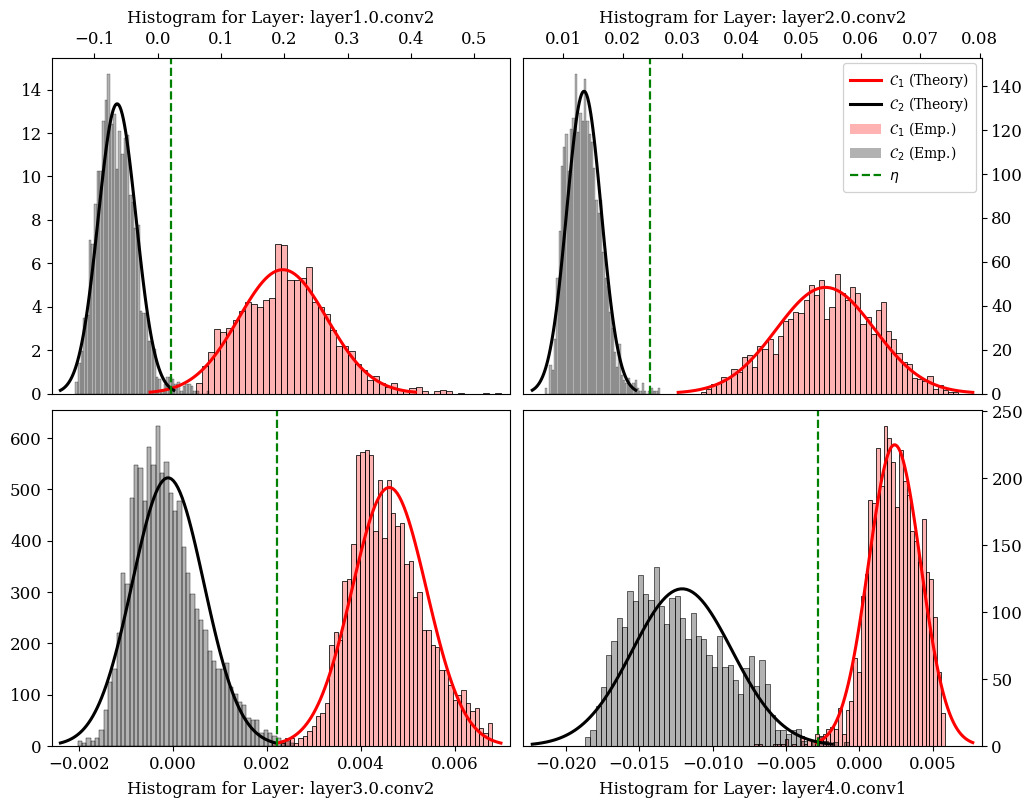}
    \caption{Close match between test-set histograms and the Gaussian predictions of the CAV projection for the  concept $\textit{blue}$ vs. \textit{noise} $\vx \sim \mathcal{N}(\bm{0}, \mI)$; ridge-regression (fixed $\lambda$), for different layers (\textit{ResNet-18} model for the \textit{CIFAR-10} dataset).}
    \label{fig:gaussian_distributions_ResNet18}
\end{figure}


\vspace{-0.8cm}
\subsection{TCAV Scores and Adversarial Attacks}
\vspace{-0.3cm}

To quantify the contribution of a concept $C$ to the prediction of class $\mathcal{C}_k$ for an input example $\vx \in \R^{d_0}$ based on
a CAV $\vv_C^l$ at layer $l$, a sensitivity score $ S_{C,k,l} (\vx)$ has been proposed \cite{kim2018interpretability},
\begin{align}
    S_{C,k,l} (\vx)
= & \lim_{{h \to 0}} \frac{h_{l,k} \left( f^l (\vx) + h \cdot  \vv_C^l \right) - h_{l,k} \left( f^l (\vx)\right)}{h} \nonumber \\
= & \left \langle  \nabla h_{l,k} \left( f^l (\vx)\right), \vv_C^l \right \rangle , \label{eq:TCAV_sensitivity_inner_product}
\end{align}
\ie the directional derivative of $h_{l,k}$ in direction of $\vv_C^l$,
equivalent to \eqref{eq:TCAV_sensitivity_inner_product} under sufficient smoothness assumptions.
To quantify the overall contribution of concept $C$ to class $\mathcal{C}_k$, \cite{kim2018interpretability} introduced
a score $\operatorname{TCAV_Q}_{C,k,l} \in [0,1]$, measuring the percentage of $S_{C,k,l} (\vx)$ being positive
for $\vx \in \mathcal{C}_k$
, or formally,
\begin{equation*}
    \operatorname{TCAV_Q}_{C,k,l} = \frac{| \{ \vx \in \mathcal{X}_k \, : \,  S_{C,k,l} (\vx) > 0 \}|}{|\mathcal{X}_k|} \; \in [0,1].
\end{equation*}
Note that $S_{C,k,l} (\vx)$ and $\operatorname{TCAV_Q}_{C,k,l}$ may strongly depend on the specific choice
of $\vv_C^l$, which in turn depends on the method to compute it, and on the rather arbitrary choice of the non-concept data distribution.
In \cite{kim2018interpretability}, randomized CAVs were computed repetitively. 
We aim to initiate a more systematic investigation, starting with the worst-case scenario by proposing an adversarial attack.
\begin{figure}
    \centering
    \includegraphics[width=1\linewidth]{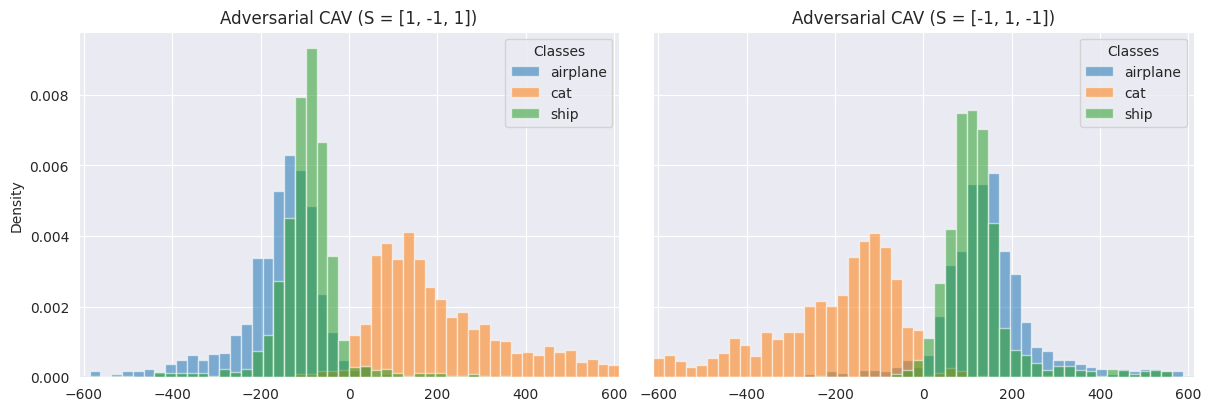}
    \caption{Histograms of 
    $S_{C,k,l} (\vx)$ from \eqref{eq:TCAV_sensitivity_inner_product} with $\vx$ from $3$ classes \textit{(colors)}
    at layer \textbf{layer3.0.conv2} of the \textit{ResNet-18} model for the \textit{CIFAR-10} dataset;
    using two different choices of $S = (s_1,s_2,s_3)$ to manipulate the CAV, and thus the histograms of the 
    scores $S_{C,k,l} (\vx)$, as well as $\operatorname{TCAV_Q}_{C,k,l}$.
}
    \label{fig:adversarial_plot_one}
\end{figure}
For a $K$-class classification problem with class-wise data matrices $\mX^{(k)}$, corresponding to the hidden-layer activations in the CAV context, consider a sign $s_k = \pm 1$. This may or may not be the ``correct'' sign for $\mathcal{C}_k$, but the sign we want to \textit{avoid} in an adversarial setting, where we would like to ``push the sensitivity scores of $\mX^{(k)}$ away from $s_k$'' (driving $\operatorname{TCAV_Q}_{C,k,l}$ either closer to $0$ or $1$), \ie we would like to enforce that $s_k \cdot \mX^{(k)} \vw< 0$  ``for many entries''. 
Precisely, we propose to minimize 
$\sum_{k=1}^K \mathds{1}_{\{s_k \cdot \mX^{(k)} \vw > 0\}}$,
where $\mathds{1}_{\{\vz > 0\}} := \sum_{i=1}^p \mathds{1}_{\{z_i > 0\}}$ counts the number of positive entries of $\vz \in \R^p$; for the implementation, we use a smooth approximation by the sigmoid function $\sigma(x) = 1/(1 + e^{-x})$, as illustrated in Fig. \ref{fig:adversarial_plot_one}.

\vspace{-0.2cm}
\section{CONCLUSION AND OUTLOOK}
\vspace{-0.2cm}

\label{sec:conclusion}



We described the distributions of \textit{PatternCAV} and \textit{FastCAV}, enabling to predict their classification accuracy, 
revealing their close connection to each other and to ridge regression.
We proposed an adversarial attack at the latent space as a worst-case scenario that manipulates explanations, 
leaving a general investigation of CAV's robustness for future work.

\bibliographystyle{IEEEbib}
\bibliography{strings,refs}

\end{document}